\newcommand{\R}{\mathbb{R}}
\newcommand{\G}{\mathcal{G}}
\newcommand{\E}{\mathcal{E}}
\newcommand{\V}{\mathcal{V}}
\newcommand{\N}{\mathcal{N}}
\renewcommand{\L}{\mathcal{L}}
\newcommand{\diag}{\text{diag}}
\newcommand{\gap}{\text{gap}}
\newcommand{\ones}{\mathbf{1}}
\newcommand{\zeros}{\mathbf{0}}
\newtheorem{definition}{Definition}
\newtheorem{lemma}{Lemma}
\newtheorem{theorem}{Theorem}
\newtheorem{problem}{Problem}
\newtheorem{remark}{Remark}
\newtheorem{prop}{Proposition}
\newtheorem{coro}{Corollary}
\newtheorem{example}{Example}
\author{Mehran Zareh, Lorenzo Sabattini, and Cristian Secchi 
	\thanks{ 
		Authors are with the Department of Sciences and Methods for Engineering (DISMI), University of Modena and Reggio Emilia, Italy \tt\small{\{mehran.zareh, lorenzo.sabattini, cristian.secchi\}@unimore.it}}
}
\begin{document}
	\title{Decentralized Biconnectivity Conditions in Multi-robot Systems}
	\date{}
	\maketitle
	\begin{abstract}
		The network connectivity in a group of cooperative robots  can be easily broken if one of them loses its connectivity with the rest of the group. In case of having robustness with respect to one-robot-fail, the communication network is termed biconnected. In simple words, to have a biconnected network graph, we need to prove that there exists no articulation point. We propose a  decentralized approach that provides sufficient conditions for biconnectivity of the network, and we prove that these conditions are related to the third smallest eigenvalue of the Laplacian matrix. Data exchange among the robots is supposed to be neighbor-to-neighbor.
	\end{abstract}
	\section{Introduction}
		The last decade has witnessed a growing interest in decentralized control and decision making \cite{olfati2005consensus, Zareh_consensus,ZarehPhd15}. Recent developments have made it possible to have a large group of autonomous robots working cooperatively to perform complex tasks which are  simply not feasible by a single robot. Since the global information is not always available, control design for multi-robot systems based on local information exchange is a challenging task. Accordingly, the design of control systems has shifted from \emph{centralized} to \emph{decentralized}, where the information, locally collected by the units (robots), is processed in locus and control decisions are taken cooperatively by the robots with no supervision.

		Usually, the robots move in unknown environments with obstacles and they can get trapped. If the rest of the team continues moving far from a trapped robot, the communication between that robot and the team becomes weaker and finally the robot gets disconnected from the group. Therefore, sensing the connectivity and trying to preserve it, is a substantial task that must be seen as an objective of the control action. There are two main approaches to preserve the connectivity: the ones to maintain the local connectivity, and approaches to preserve the global connectivity.
		In local connectivity maintenance the aim is to develop a controller that keeps all initially existing communication links. Some examples of decentralized controller design for local connectivity maintenance algorithms can be found in \cite{notarstefano2006maintaining, ajorlou2010class}. Using these approaches, a proof for the network connectivity can be given. However, assuming the maintenance of every link is too restrictive, and several researchers have considered relaxations to the local connectivity maintenance such as assuming a spanning tree \cite{schuresko2009distributed}, and $k$-hop connectivity \cite{zavlanos2008distributed}. In comparison to the local ones, the global connectivity maintenance algorithms are based on global quantities of the network, and do not restrict link failures or creations (see e.g. \cite{zavlanos2011graph,sabattini2013decentralized,sabattini2013distributed,giordano2013passivity}).
		In \cite{de2006decentralized}, the authors propose a decentralized algorithm and quantify the
		connectivity property of the multi-agent system	with the second smallest eigenvalue of the state dependent Laplacian of the proximity graph. In \cite{franceschelli2013decentralized}, using an additional locally generated and communicated variable, a decentralized estimate  of the Laplacian spectrum is provided. In \cite{yang2010decentralized}, using a previously introduced decentralized estimator, the Fiedler vector and the algebraic connectivity are estimated.
		

		In order to achieve a robust communication in a team of cooperating mobile robots, the connectivity must be preserved when a single robot crashes or is suddenly called by a human user to perform  another task. This is equivalent to requiring that the network graph remains connected if one of the nodes and all its incident edges are removed. A graph with this property is said to be biconnected \cite{golumbic2004algorithmic}.  In addition to robustness, biconnectivity provides a better bandwidth for communication by providing multiple paths to the destination. The connectivity robustness of robot networks under failures is often neglected in the literature. Some related works in graph theory describe algorithms to find biconnected components in a graph based on optimization theories. The algorithms mainly utilize depth-first search or backtracking \cite{tarjan1972depth, tarjan1984finding} in a centralized way. In \cite{ahmadi2006distributed,ahmadi2006keeping}, the problem of biconnectivity check in a network is addressed. Although the algorithm is labeled distributed, the information exchange to make a table of connected and doubly-connected nodes is assumed, which imposes the nodes to exchange a big amount of information. In \cite{butterfield2008autonomous}, an algorithm to change a connected mobile robot graph into a biconnected configuration is proposed. Since the algorithm requires a global probe, it cannot be seen as a decentralized one. Very recently, \cite{ghedini2015improving} investigated the robustness problem in multi-robot systems so that, despite robot failures, most of them remain  connected and are able to continue the mission. Based on a maximum 2-hop communication, each robot is able to detect dangerous topological configurations in the sense of the connectivity and can mitigate in order to reach a new position to get a better connectivity level. The paper, based on local information, introduces a parameter, called vulnerability, that allows each robot to detect the level of its effect on the topological configuration.
		
		 
		In this paper, we provide algorithms to enable each node of the network graph to detect if it is a crucial one for the network connectivity, i.e., a node whose disconnection causes loss of connectivity of the graph. These nodes are termed as articulation points. To the best of the authors' knowledge, the problem of decentralized articulation point detection has not been studied by now. First, each robot perturbs its communication link weight. Then, based on matrix perturbation theory, the condition for each robot not to be an articulation point is achieved. Obviously, if there is no articulation point, the resulting graph is biconnected. We show that the graph biconnectivity is related to the third smallest eigenvalue of the Laplacian matrix.

		The rest of the paper is organized as follows. First, we introduce notations and some basic theorems and definitions on graph theory, which will be used in this work. Section \ref{section:problem_statement} introduces the main problem, and provides some essential definitions. Section \ref{section:main_results} provides the main contribution of this paper. We provide some theorems on perturbed communication weights to detect the articulation points with only 1-hop communications. In Section \ref{section:simulations} the simulation results are given to verify the theoretical findings. Finally, Section \ref{section:conclusions} concludes the paper, describes the open problems, and outlines the future directions.

		
	\section{Preliminaries}\label{section:Prob_for}

		In this section we recall some basic notions and definitions on graph theory and we introduce the notation used in the paper.
		
		The topology of bidirectional communication channels among the robots is represented by an undirected graph $\mathcal{G}=(\V, \E)$ where $\V=\{1,\ldots,n\}$ is the set of nodes (robots) and $\E\subset\V\times \V$ is the set of edges. An edge $(i, j) \in  \E$ exists if there is a
		communication channel between robots $i$ and $j$. Self loops
		$(i, i)$ are not considered. The set of robot $i$'s neighbors
		is denoted by $\N_i  = \{j \ : \ (j, i) \in  \E; j = 1, \ldots, n\}$. 
		The network graph $\mathcal{G}$ is encoded by the so-called {\em adjacency matrix}, an $n \times n$ matrix $A$ whose $(i,j)$-th entry $a_{ij}$ is greater than $0$ if $(i,j)\in \E$, $0$ otherwise. Obviously in an undirected graph matrix $A$ is symmetric.
		The degree matrix is defined as $D=\diag(d_1,d_2, \ldots, d_n)$ where	$d_i = \sum_{j=1}^{n}a_{ij}$ is the degree of node $i$. The Laplacian matrix of a graph is defined as $\L=D-A$. 
		The Laplacian matrix of a graph has several structural properties. It has non-negative real eigenvalues for any graph $\G$. Furthermore, let $\ones$ and $\zeros$ be respectively the vectors of ones and zeros with proper dimensions, then $\L\ones=\zeros$ and $\ones^T\L=\zeros^T$. Denote by $\lambda_i(\cdot)$ the $i$-th leftmost eigenvalue, and by $v_i(\cdot)$ and $w_i(\cdot)$ the right and left eigenvectors associated with $\lambda_i(\cdot)$. In this way, the eigenvalues of the Laplacian matrix can be ordered as
		$$0=\lambda_1(\L)\le\lambda_2(\L)\le \ldots\le \lambda_n(\L).$$
		 In $\G$ a node $i$ is reachable from a node $j$ if there exists an undirected path from $j$ to $i$. If $\G$ is connected then $\L$ is a symmetric positive semidefinite irreducible matrix.  Moreover, the algebraic multiplicity of the null eigenvalue of $\L$ is one.
		For a graph $\G$, the second smallest eigenvalue of the Laplacian matrix is called \emph{algebraic connectivity}. This eigenvalue gives a qualitative measure of connectedness of the graph. 
			Algebraic connectivity is a non-decreasing function of graphs with the same set of vertices. This means that, if $\G_1(\V, \E_1)$ and $\G_2(\V, \E_2)$ are two graphs constructed on the set $\V$  such that $\E_1 \subseteq \E_2$, then $\lambda_2(\L_1)\leq\lambda_2(\L_2)$. 
		In the other words,  the more connected the graph becomes the larger the algebraic connectivity will be. The corresponding eigenvector to the second smallest eigenvalue is called \emph{Fiedler vector}, which gives very useful information about the graph \cite{fiedler1975property}.  
		The next lemma explains a relation between the  eigenvectors of a Laplacian matrix.
		\begin{lemma}\cite{de2007old}\label{lemma:fiedler_prepend}
			Let $v_k(\L)$, $1<k\le n$, be a non-null eigenvector of the Laplacian matrix. Then
			\begin{equation}
				v_k^T(\L)\ones=0.
			\end{equation}
		\end{lemma}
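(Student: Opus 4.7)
The plan is to exploit the symmetry of $\L$ together with the fact, already recalled in the preliminaries, that $\L\ones=\zeros$ (so equivalently $\ones^{T}\L=\zeros^{T}$). This immediately makes $\ones$ a left eigenvector of $\L$ associated with the eigenvalue $0=\lambda_1(\L)$, so the claim will reduce to the standard orthogonality between left and right eigenvectors corresponding to different eigenvalues of a symmetric matrix.

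First, I would start from the defining relation $\L v_k(\L)=\lambda_k(\L)\,v_k(\L)$ and left-multiply by $\ones^{T}$:
\begin{equation*}
    \ones^{T}\L\,v_k(\L)=\lambda_k(\L)\,\ones^{T}v_k(\L).
\end{equation*}
Using $\ones^{T}\L=\zeros^{T}$, the left-hand side vanishes, so that $\lambda_k(\L)\,\ones^{T}v_k(\L)=0$. Whenever $\lambda_k(\L)\neq 0$ this forces $\ones^{T}v_k(\L)=0$, which (by transposing the scalar) gives $v_k^{T}(\L)\ones=0$ as required. Because the Laplacian is symmetric, the right and left eigenvectors coincide, so no separate argument is needed for $w_k(\L)$.

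The only delicate step is the case $\lambda_k(\L)=0$ with $k>1$, which can occur when $\G$ is disconnected and $0$ has algebraic multiplicity greater than one. I expect this to be the main (and only) subtlety. Here the identity $\lambda_k\,\ones^{T}v_k=0$ is vacuous, but symmetry of $\L$ guarantees that the null eigenspace admits an orthonormal basis; since $\ones$ already lies in this null space, one can apply Gram--Schmidt starting from $\ones/\sqrt{n}$ and select the remaining eigenvectors $v_k(\L)$, $k>1$, within $\ker(\L)$ to be orthogonal to $\ones$. With this standard convention on the choice of a basis of eigenvectors, the conclusion $v_k^{T}(\L)\ones=0$ holds for every $k>1$, and the lemma follows.
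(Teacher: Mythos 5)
Your proof is correct. Note that the paper does not actually prove this statement: it is quoted from the cited reference, so there is no in-paper argument to compare against. Your derivation is the standard one — $\ones^{T}\L=\zeros^{T}$ makes $\ones$ an eigenvector for $\lambda_1=0$, and left-multiplying $\L v_k=\lambda_k v_k$ by $\ones^{T}$ gives $\lambda_k\,\ones^{T}v_k=0$, hence orthogonality whenever $\lambda_k\neq 0$ — and this is exactly how the lemma is used later (in Theorem~\ref{theorem:P^R} it is applied to an eigenvector of a \emph{non-null} eigenvalue of $\L^i(\epsilon)$). You also correctly identify the one genuine subtlety: if the graph is disconnected, the kernel has dimension greater than one and an arbitrary eigenvector $v_k$ with $\lambda_k=0$, $k>1$, need not satisfy $v_k^{T}\ones=0$; the statement then only holds either under the reading ``eigenvector associated with a non-null eigenvalue'' (which is what ``non-null eigenvector'' is meant to convey) or under the convention of an orthogonal eigenbasis containing $\ones$, as in your Gram--Schmidt remark. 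So your treatment is sound and, if anything, slightly more careful than the bare citation in the paper requires.
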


We denote $\tilde{a}_i=[a_{ij}]^T\in \R^{n-1}, \ \ j=1,\ldots,n,j\ne i$. We also define the perturbed adjacency matrix $A^i(\epsilon)$ obtained from $A$ by multiplying all $a_{ij}$ and $a_{ji}$s by $\epsilon\in \R^+ $. The associated perturbed degree $D^i(\epsilon)=\diag(A^i(\epsilon)\ones)$ and Laplacian matrix $\L^i(\epsilon)=D^i(\epsilon)-A^i(\epsilon)$ are defined accordingly. We indicate the reduced graph $\G^{R_i}$ achieved from $\G$ by removing node $i$ and all its incident edges. Accordingly, $A^{R_i}$ is the adjacency matrix, $D^{R_i}$ is the degree matrix, and $\L^{R_i}$ is the Laplacian matrix of $\G^{R_i}$.	
	
Communications are assumed to be between each robot and its 1-hop neighbors. We assume that the network connectivity is guaranteed, and each robot can properly estimate the algebraic connectivity. For the connectivity maintenance conditions and algebraic connectivity estimation procedure, the readers are referred to \cite{yang2010decentralized,sabattini2013decentralized,franceschelli2013decentralized}.

\section{Problem Statement}\label{section:problem_statement}
Consider a network of $ n (> 2 ) $ robots whose interconnection
structure is modeled by an undirected graph $\G( \V , \E )$.

The following definitions from the algebraic graph theory  will be used in the rest of this paper.
\begin{definition}
	A vertex $i \in \V $ of a connected graph $\G$ is called an \emph{articulation point} if $\G^{R_i}$ is not connected. 
\end{definition}
\begin{definition}
	A connected graph is called \emph{biconnected} if it has no articulation point.
\end{definition}
\begin{definition}
	A \emph{block} in $\G$ is a maximal induced connected subgraph with no articulation point. If $\G$ itself is connected and has no articulation point, then $\G$ is a block \cite{west2001introduction}.
\end{definition}

\begin{definition}
	In a graph $\G(\V,\E)$, two paths between vertices $i, j\in \V$ are called \emph{internally disjoint} if they have no other vertices in common. 
\end{definition}
\begin{definition}
	In a graph $\G(\V,\E)$, two vertices $i, j\in \V$ are said to be \emph{doubly connected} $\iff$  there are two or more internally disjoint paths between them.
\end{definition}
The following lemma, from \cite{ahmadi2006keeping}, explains the relation between biconnectivity and doubly connected vertices.
\begin{lemma}\label{lemma:bicon_internaldisj}
	A given undirected graph $\G(\V,\E)$ is biconnected $ \iff $ any two vertices $i, j\in \V$ are doubly connected.
\end{lemma}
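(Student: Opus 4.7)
The plan is to prove each direction of the biconditional separately. The $(\Leftarrow)$ direction is a short direct argument, while $(\Rightarrow)$ needs an inductive construction of a second path.

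For $(\Leftarrow)$, assuming every pair of vertices admits two internally disjoint paths, I fix an arbitrary $v\in\V$ and show that $\G^{R_v}$ is connected. For any $i,j\in\V\setminus\{v\}$, the hypothesis yields two internally disjoint $i$-$j$ paths in $\G$; since their internal vertex sets are disjoint, $v$ is internal to at most one of them, so the other lies entirely in $\G^{R_v}$ and connects $i$ to $j$ there. Hence $v$ is not an articulation point; since $v$ was arbitrary and $\G$ is connected, $\G$ is biconnected.

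For $(\Rightarrow)$, I induct on the distance $d(i,j)$. In the base case $d(i,j)=1$, the edge $(i,j)$ itself is one path; if no other $i$-$j$ path existed, that edge would be a bridge, so removing it would split $\V$ into components $C_i\ni i$ and $C_j\ni j$. Because $n>2$, one of these, say $C_i$, contains a further vertex $w$; every $w$-$j$ path in $\G$ must cross the bridge and therefore pass through $i$, making $i$ an articulation point and contradicting biconnectivity. For the inductive step $d(i,j)=k\geq 2$, I take a neighbor $w$ of $j$ on a shortest $i$-$j$ path (so $d(i,w)=k-1$) and invoke the hypothesis to obtain internally disjoint paths $P_1,P_2$ from $i$ to $w$. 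If $j$ lies on one of them, say $P_1$, I truncate $P_1$ at $j$ and append the edge $(w,j)$ to $P_2$; the endpoint $w$ of $P_1$ cannot recur in the truncation, so the two new paths have disjoint interiors. Otherwise, biconnectivity makes $\G^{R_w}$ connected, furnishing a path $R$ from $i$ to $j$ that avoids $w$; I let $y$ be the last vertex of $R$ (traversed from $i$) that lies on $P_1\cup P_2$, say $y\in P_1$, and form one $i$-$j$ path by following $P_1$ from $i$ to $y$ and then $R$ from $y$ to $j$, and the other by following $P_2$ to $w$ and then the edge $(w,j)$.

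The main obstacle is this last sub-case of the inductive step: one must check several incidences to certify that the two constructed paths are simple and internally disjoint, namely that the $y$-to-$j$ tail of $R$ meets neither $P_1\cup P_2$ nor $w$ (from the maximality of $y$ and $R$ avoiding $w$), and that $w$ does not appear on the $i$-to-$y$ portion of $P_1$ (since $y\neq w$ and $P_1$ is simple). Once these routine incidences are verified, the induction closes and the proof is complete.
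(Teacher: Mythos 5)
Your proof is correct, but it takes a genuinely different route from the paper: the paper does not prove this lemma at all, importing it as a known result from \cite{ahmadi2006keeping} (it is essentially Whitney's characterization of 2-connectivity, a special case of Menger's theorem), whereas you supply a self-contained elementary argument. Your $(\Leftarrow)$ direction is the standard short argument and is fine, and your $(\Rightarrow)$ direction is the classical induction on $d(i,j)$: the base case correctly turns a would-be bridge into an articulation point (and you rightly invoke $n>2$ here --- note this hypothesis is not in the lemma statement itself but is the paper's standing assumption from the Problem Statement; without it, $K_2$ is biconnected under the paper's definitions yet its two vertices are not doubly connected, so your use of $n>2$ is not optional), and the inductive step with the neighbor $w$, the two cases ($j$ on $P_1\cup P_2$ or not), the detour $R$ in $\G^{R_w}$, and the last intersection vertex $y$ is exactly the textbook construction. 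The ``routine incidences'' you flag are indeed the only things to check, and your stated justifications (maximality of $y$, $R$ avoiding $w$, simplicity of $P_1$, and internal disjointness of $P_1,P_2$ forcing $j\notin P_2$ in the first case) are the right ones, so the induction closes. What your approach buys is independence from the cited reference and an argument using only the paper's own definitions (articulation point, internally disjoint paths); what the paper's approach buys is brevity, since the result is standard and plays only a supporting role in motivating Problem~1. One cosmetic slip: in the $(\Leftarrow)$ direction you write ``and $\G$ is connected'' as if it were an assumption, whereas it follows from the hypothesis (any doubly connected pair is in particular joined by a path); say so explicitly.
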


Now we are ready to define the main problem that we are going to study in this paper.

\begin{problem}
	For a multi-robot system with a connected interaction graph $\G$, find conditions based only on local data exchange so that there are more than one internally disjoint paths between any pair of nodes.  Equivalently, from Lemma \ref{lemma:bicon_internaldisj}, we are looking for the conditions under which the graph is biconnected.
\end{problem}

A very quick question that comes after the above problem is that if the graph is not biconnected, what strategies can bring the graph to the desired configuration.
We leave this problem for future studies.
\section{Main Contribution}\label{section:main_results}
To enable each single robot to be aware of its connectivity status in the graph, it needs to know the characteristics of the network graph when all its incident edges are disconnected. If the graph remains connected when the robot $i$ fails, then the node $i$ in the graph is not an articulation point. By putting weakly connected links between node $i$ and its neighbors, we aim at providing an estimate of the conditions after a complete disconnection. The proposed methodology includes the following steps
\begin{enumerate}[a)]
	\item First, we introduce an intermediate matrix $P^i(\epsilon)$, for each node $i$, whose eigenvalues are equal to the non-null ones of the perturbed Laplacian matrix $\L^i(\epsilon)$ with $\epsilon\in \R^+$ as a local design parameter  (Theorem \ref{theorem:P^R}).
	\item Then, we find an upper bound on the maximum gap between the pairs of the eigenvalues of this intermediate matrix and those of the reduced Laplacian matrix, $\L^{R_i}$ (Proposition \ref{prop:real_eig_F} and Lemma \ref{lemma:distanc_eig}).
	\item We provide some conditions on the third smallest eigenvalue of the perturbed Laplacian matrix so that the reduced graph $\G^{R_i}$ remains connected (Theorem \ref{theorem:algebraic_biconn}). 
	\item Finally, we demonstrate that, if the above conditions hold only for non-locally biconnected (defined later) nodes of $\G$, then $\G$ is biconnected (Proposition \ref{prop:locally_biconn} and Corollary \ref{coro:biconn}).  
\end{enumerate}

	\begin{theorem}\label{theorem:P^R}
		Given an undirected graph $\G(\V, \E)$ with $n$ nodes, for a given real scalar $\epsilon$,  the eigenvalues of the following matrix
	\begin{equation}\label{eq:L^R}
		 P^i(\epsilon)=\L^{R_i}+\epsilon \diag(\tilde{a}_i^T)+\epsilon\tilde{a}_i\ones ^T,
	\end{equation}
	are equal to non-null eigenvalues of $\L^i(\epsilon)$.
	\end{theorem}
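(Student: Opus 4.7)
The plan is to exhibit a similarity transformation that peels the trivial zero eigenvalue off $\L^i(\epsilon)$ and leaves an $(n-1)\times(n-1)$ block whose spectrum matches that of $P^i(\epsilon)$. Without loss of generality I relabel the nodes so that $i=n$, since a vertex-label permutation is an orthogonal similarity and hence preserves eigenvalues. From the definitions of $A^i(\epsilon)$, $D^i(\epsilon)=\diag(A^i(\epsilon)\ones)$, and $\L^i(\epsilon)=D^i(\epsilon)-A^i(\epsilon)$ one reads off the block partition
\begin{equation*}
\L^i(\epsilon) = \begin{pmatrix} \L^{R_i}+\epsilon\diag(\tilde{a}_i) & -\epsilon\tilde{a}_i \\ -\epsilon\tilde{a}_i^T & \epsilon\,\ones^T\tilde{a}_i \end{pmatrix},
\end{equation*}
using that for $j\neq n$ the $j$-th diagonal entry of $D^i(\epsilon)$ equals the degree of $j$ in $\G^{R_i}$ plus $\epsilon a_{ji}$, while the last diagonal entry is $\epsilon\,\ones^T\tilde{a}_i$.

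Next I apply the similarity
\begin{equation*}
T = \begin{pmatrix} I_{n-1} & \ones \\ 0^T & 1 \end{pmatrix}, \qquad T^{-1} = \begin{pmatrix} I_{n-1} & -\ones \\ 0^T & 1 \end{pmatrix},
\end{equation*}
whose rationale is that $\ones_n$ spans the kernel of $\L^i(\epsilon)$, so $\L^i(\epsilon)T$ must have a zero last column. A short block multiplication using the cancellations $\L^{R_i}\ones=0$ and $\diag(\tilde{a}_i)\ones=\tilde{a}_i$ confirms this, and left-multiplication by $T^{-1}$ then yields the block-triangular form
\begin{equation*}
T^{-1}\L^i(\epsilon)T = \begin{pmatrix} \L^{R_i}+\epsilon\diag(\tilde{a}_i)+\epsilon\,\ones\tilde{a}_i^T & 0 \\ -\epsilon\tilde{a}_i^T & 0 \end{pmatrix},
\end{equation*}
so the spectrum of $\L^i(\epsilon)$ is $\{0\}$ together with that of the upper-left block.

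Finally, I observe that the upper-left block is exactly $P^i(\epsilon)^T$: the matrices $\L^{R_i}$ and $\diag(\tilde{a}_i)$ are symmetric, while $\ones\tilde{a}_i^T$ is the transpose of $\tilde{a}_i\ones^T$. Since a matrix and its transpose share the same spectrum, the eigenvalues of $P^i(\epsilon)$ coincide with the non-null eigenvalues of $\L^i(\epsilon)$, proving the claim. The only delicate point I expect is pure bookkeeping: the block product must be carried through carefully, and one must notice that what the similarity produces naturally is $P^i(\epsilon)^T$ rather than $P^i(\epsilon)$, so invoking transpose-invariance of the spectrum is what closes the identification cleanly.
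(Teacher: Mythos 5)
Your proof is correct, and the computation checks out: with $i$ relabeled as the last node, $\L^i(\epsilon)T$ indeed has a zero last column because $\L^{R_i}\ones=\zeros$ and $\diag(\tilde a_i)\ones=\tilde a_i$, and left multiplication by $T^{-1}$ produces the block lower-triangular matrix whose leading block is $\L^{R_i}+\epsilon\diag(\tilde a_i)+\epsilon\ones\tilde a_i^T=P^i(\epsilon)^T$. However, your route differs from the paper's. The paper argues at the level of a single eigenpair: it takes a non-null eigenvalue of $\L^i(\epsilon)$, partitions its eigenvector as $(v^1,v^2)$, invokes Lemma~\ref{lemma:fiedler_prepend} to get $v^2=-\ones^T v^1$, and substitutes into the first block equation to conclude that the eigenvalue also belongs to $P^i(\epsilon)$; this establishes only the inclusion of the non-null spectrum of $\L^i(\epsilon)$ into that of $P^i(\epsilon)$, without multiplicities or the converse. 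Your similarity/deflation argument is stronger and cleaner: the block-triangular form gives the characteristic-polynomial factorization $\det(\lambda I-\L^i(\epsilon))=\lambda\,\det\bigl(\lambda I-P^i(\epsilon)\bigr)$, so the two spectra agree up to one extra zero, with multiplicities, in both directions --- which is exactly what the indexing statement $\lambda_k(P^i(\epsilon))=\lambda_{k+1}(\L^i(\epsilon))$ in Corollary~\ref{remark:1} actually requires. It also avoids any appeal to Lemma~\ref{lemma:fiedler_prepend}, using only $\L^i(\epsilon)\ones=\zeros$ (note in passing that your justification needs only that $\ones$ lies in the kernel, not that it spans it). The small price is the final transpose step, which you handle correctly via invariance of the spectrum under transposition.
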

	\begin{proof}
		Without loss of generality, assume that the node $i$ is the last node, that is, the associated elements in the adjacency, degree, and Laplacian matrices are in the last column and row. We can simply reformulate the perturbed adjacency matrix as
		\begin{equation}
		A^i(\epsilon)=\left[\begin{array}{cc}
		A^{R_i}&\epsilon\tilde{a}_i  \\ 
		\epsilon \tilde{a}_i^T&0
		\end{array} \right],
		\end{equation}
		and, subsequently,  the perturbed degree matrices as
		\begin{equation}
		D^i(\epsilon)=\diag(A^i(\epsilon)\ones)=\left[\begin{array}{cc}
		D^{R_i}+\epsilon \diag(\tilde{a}_i^T)& \zeros \\ 
		\zeros^T& \epsilon d_i 
		\end{array} \right].
		\end{equation}
		By simple calculations we get
		\begin{equation}
			\L^i(\epsilon)=D^i(\epsilon)-A^i(\epsilon)=\left[\begin{array}{cc}
			\L^{R_i}+\epsilon \diag(\tilde{a}_i^T)&-\epsilon\tilde{a}_i  \\ 
			-\epsilon \tilde{a}_i^T&\epsilon d_i 
			\end{array} \right].
		\end{equation}		
		Let $\lambda_i(\L^i(\epsilon))$ be a non-null eigenvalue of $\L^i(\epsilon)$ and  $v(\L^i(\epsilon))=\left[\begin{array}{c}
		v^1(\L^i(\epsilon))\\v^2(\L^i(\epsilon))	
		\end{array} \right]$, with $	v^1(\L^i(\epsilon))\in\R^{n-1}$, and $	v^2(\L^i(\epsilon))\in\R$, be a corresponding eigenvector. We have
		\begin{equation}
		\L^i(\epsilon) v(\L^i(\epsilon))=\lambda(\L^{i}(\epsilon)) v(\L^i(\epsilon)).
		\end{equation}
		or
		\begin{equation}\label{eq:LR_eigenvalue}
			\displaystyle\left\{\begin{array}{l}
			(\L^{R_i}	+\epsilon \diag(\tilde{a}_i^T))v^1(\L^i(\epsilon))-\epsilon\tilde{a}_iv^2(\L^i(\epsilon))= \\ \hspace{3cm}\vspace{3mm}
			 \lambda(\L^{i}(\epsilon)) v^1(\L^i(\epsilon))\\ 
			-\epsilon \tilde{a}_i^Tv^1(\L^i(\epsilon))+\epsilon d_iv^2(\L^i(\epsilon))=\\
			 \hspace{3cm}\vspace{3mm}\lambda(\L^{i}(\epsilon)) v^2(\L^i(\epsilon)).
			\end{array} \right.
		\end{equation}
		From Lemma \ref{lemma:fiedler_prepend} we can find a relationship between $v^1(\L^i(\epsilon))$  and $v^2(\L^i(\epsilon))$. We know that
		$$v^T(\L^i(\epsilon))\ones=0,$$
		hence
		\begin{equation}
			v^2(\L^i(\epsilon))=-v^{1^T}(\L^i(\epsilon))\ones =-\ones^T	v^1(\L^i(\epsilon)).
		\end{equation}
		By replacing in the first equation of \eqref{eq:LR_eigenvalue}, we obtain
		\begin{equation}
			(\L^{R_i}	+\epsilon \diag(\tilde{a}_i^T)+\epsilon\tilde{a}_i\ones^T)v^1(\L^i(\epsilon))= \lambda_i(\L^i(\epsilon)) v^1(\L^i(\epsilon)),
		\end{equation}
		which proves that $\lambda(\L^{i}(\epsilon))$ is an eigenvalue of the matrix $P^i(\epsilon)=(\L^{R_i}	+\epsilon \diag(\tilde{a}_i^T)+\epsilon\tilde{a}_i\ones )$, and the corresponding eigenvector is $v^1(\L^i(\epsilon))$.
	\end{proof}
	\begin{coro}\label{remark:1}
		If $\G$ is a connected graph, then $\L^i(\epsilon)$ has only one null eigenvalue and the other eigenvalues are positive. Then 
			$$\lambda_k(P^i(\epsilon))=\lambda_{{k+1}}(\L^i(\epsilon))\  \textit{for} \ k=1,\ldots,n-1.$$  
	\end{coro}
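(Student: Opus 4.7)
The plan is to combine Theorem \ref{theorem:P^R} with a simple dimension count. First I would note that, since $\epsilon\in\R^+$, the perturbed graph underlying $\L^i(\epsilon)$ has exactly the same edge set as $\G$ (only the weights on the edges incident to node $i$ are rescaled by a positive factor), and it is therefore connected. By the structural properties of Laplacians of connected weighted graphs recalled in Section \ref{section:Prob_for}, the zero eigenvalue of $\L^i(\epsilon)$ has algebraic multiplicity one and the remaining $n-1$ eigenvalues are strictly positive.

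Next I would apply Theorem \ref{theorem:P^R}: every eigenvalue of $P^i(\epsilon)$ is a non-null eigenvalue of $\L^i(\epsilon)$. To upgrade this inclusion to an equality of ordered spectra I would revisit the map $v\mapsto v^1$ constructed in its proof. Restricted to any non-null eigenspace of $\L^i(\epsilon)$ this map is linear and injective, because $v^1=\zeros$ would force $v^2=-\ones^T v^1=0$ and hence $v=\zeros$, contradicting the assumption that $v$ is an eigenvector. Summing the resulting lower bounds on the geometric multiplicities of $P^i(\epsilon)$ over all positive eigenvalues of $\L^i(\epsilon)$ yields at least $n-1$. Since $P^i(\epsilon)$ is an $(n-1)\times(n-1)$ matrix and thus carries at most $n-1$ eigenvalues counted with algebraic multiplicity, every inequality must be tight and the two multisets of eigenvalues coincide.

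Sorting both lists in non-decreasing order and accounting for the single extra null eigenvalue of $\L^i(\epsilon)$ at the bottom would then give $\lambda_k(P^i(\epsilon))=\lambda_{k+1}(\L^i(\epsilon))$ for $k=1,\ldots,n-1$, which is exactly the claim.

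The only delicate point is multiplicity bookkeeping: the rank-one term $\epsilon\tilde{a}_i\ones^T$ breaks the symmetry of $P^i(\epsilon)$, so its diagonalizability is not automatic and one cannot simply appeal to equal algebraic and geometric multiplicities. The argument above sidesteps this by comparing geometric multiplicities in $P^i(\epsilon)$ (via the injective map $v\mapsto v^1$) against the algebraic-multiplicity ceiling of $n-1$, which does not require any self-adjointness of $P^i(\epsilon)$.
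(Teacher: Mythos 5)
Your proof is correct, and it follows the same basic route the paper intends: combine Theorem \ref{theorem:P^R} with the fact that a connected weighted graph (here the perturbed graph, whose edge set is unchanged because $\epsilon>0$ only rescales weights) has a simple zero Laplacian eigenvalue, then shift indices. Where you go beyond the paper is the multiplicity bookkeeping: the proof of Theorem \ref{theorem:P^R} as written only establishes the containment that every non-null eigenvalue of $\L^i(\epsilon)$ is an eigenvalue of $P^i(\epsilon)$, and the corollary is stated as if the two ordered spectra coincide. Your dimension count -- injectivity of $v\mapsto v^1$ on each positive eigenspace of the symmetric matrix $\L^i(\epsilon)$ (via Lemma \ref{lemma:fiedler_prepend}), giving total geometric multiplicity at least $n-1$ in the $(n-1)\times(n-1)$ matrix $P^i(\epsilon)$, hence equality of the spectra as multisets and, incidentally, realness of all eigenvalues of $P^i(\epsilon)$ -- closes exactly this gap, and it rightly avoids assuming diagonalizability of the non-symmetric $P^i(\epsilon)$. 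This is a tighter argument than the paper's implicit one, and the only hypotheses it uses (symmetry of $\L^i(\epsilon)$, $\epsilon>0$, connectivity of $\G$) are all available.
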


	Note $P^i(\epsilon)$ is achieved from $\L^{R_i}$ perturbed by the non-symmetric term $\epsilon(\diag(\tilde{a}_i)+\tilde{a}_i\ones^T)$. Before introducing a theorem to find an upper bound on the eigenvalue changes between these matrices, we need to show that any linear combination of them gets real eigenvalues. 
	\begin{prop}\label{prop:real_eig_F}
		For any given $\alpha, \beta\in \R$ and $\epsilon\in \R$ so that $\alpha^2+\beta^2\ne 0$,  the linear combination $F^i(\epsilon)=\alpha \L^{R_i} +\beta P^i(\epsilon)$ has real eigenvalues. 
	\end{prop}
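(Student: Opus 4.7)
The plan is to lift $F^i(\epsilon)$ to an $n \times n$ real symmetric matrix whose spectrum contains that of $F^i(\epsilon)$, mirroring the block-matrix reduction used in the proof of Theorem~\ref{theorem:P^R}. Substituting the definition of $P^i(\epsilon)$ from \eqref{eq:L^R} into $F^i(\epsilon) = \alpha\L^{R_i} + \beta P^i(\epsilon)$ yields
\begin{equation*}
F^i(\epsilon) = (\alpha+\beta)\L^{R_i} + \beta\epsilon\,\diag(\tilde{a}_i^T) + \beta\epsilon\,\tilde{a}_i\ones^T,
\end{equation*}
so the only obstruction to reality of the spectrum is the rank-one, non-symmetric term $\beta\epsilon\,\tilde{a}_i\ones^T$; the first two summands are symmetric.

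I would then introduce the augmented $n\times n$ matrix
\begin{equation*}
\hat{\L}=\begin{bmatrix}(\alpha+\beta)\L^{R_i}+\beta\epsilon\,\diag(\tilde{a}_i^T) & -\beta\epsilon\,\tilde{a}_i\\ -\beta\epsilon\,\tilde{a}_i^T & \beta\epsilon\,d_i\end{bmatrix},
\end{equation*}
which is symmetric by inspection and, by a row-sum calculation identical in structure to the one that establishes $\L^i(\epsilon)\ones=\zeros$ in the proof of Theorem~\ref{theorem:P^R}, satisfies $\hat{\L}\ones=\zeros$. Although $\hat{\L}$ is not in general a weighted Laplacian (since $\alpha+\beta$ and $\beta\epsilon$ can have opposite signs), symmetry together with the zero-row-sum property is all that the argument requires. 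I would then run the block reduction of Theorem~\ref{theorem:P^R} in the lifting direction: for any eigenpair $(\lambda,v^1)$ of $F^i(\epsilon)$, define $v=\begin{bmatrix}v^1\\ -\ones^T v^1\end{bmatrix}$. The upper block of $\hat{\L}v$ equals $F^i(\epsilon)v^1=\lambda v^1$ by construction, while the lower block reduces to $-\lambda\ones^T v^1$ after using the identity $\ones^T F^i(\epsilon)v^1=\beta\epsilon(\tilde{a}_i^T+d_i\ones^T)v^1$, which follows immediately from $\ones^T\L^{R_i}=\zeros^T$ and $\ones^T\tilde{a}_i=d_i$. Hence $\hat{\L}v=\lambda v$, so every eigenvalue of $F^i(\epsilon)$ is also an eigenvalue of the real symmetric matrix $\hat{\L}$, and is therefore real.

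The only genuinely non-routine step is guessing the correct lift $\hat{\L}$: the off-diagonal block and the bottom-right entry must be calibrated so that the asymmetric rank-one perturbation $\beta\epsilon\,\tilde{a}_i\ones^T$ is absorbed while the zero-row-sum property is preserved simultaneously. Once that template is identified, every computation is a direct transcription of the proof of Theorem~\ref{theorem:P^R}, and no new perturbation-theoretic machinery is needed—the role of this proposition is simply to set up Lemma~\ref{lemma:distanc_eig} by certifying that the Bauer--Fike-type eigenvalue-gap bound will range over real eigenvalues only.
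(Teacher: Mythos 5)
Your proof is correct, and it takes a genuinely different route from the paper's. The paper proves the proposition with perturbation machinery: it writes $F^i(\eta)=Q^i(\eta)+\eta\,\diag(\tilde{a}_i^T)$ with $Q^i(\eta)=\gamma\L^{R_i}+\eta\,\tilde{a}_i\ones^T$, $\gamma=\alpha+\beta$, $\eta=\beta\epsilon$, first shows the spectrum of $Q^i(\eta)$ is real (the non-null eigenpairs of $\L^{R_i}$ are unaffected because $\ones^Tv_k(\L^{R_i})=0$, and the null ones are tracked via the semi-simple eigenvalue-derivative lemma, Lemma~\ref{lemma:cai_eigenvalue_perturb}), and then expands an eigenvalue and eigenvector of $F^i(\eta)$ as power series in $\eta$ and argues by induction on the coefficients that each term is real. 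That argument is intrinsically local: it requires $\eta$ small, convergence of the series, and the derivative lemma. Your lift $\hat{\L}$ avoids all of this: the verification that $\hat{\L}v=\lambda v$ for $v=\left[\begin{array}{c}v^1\\-\ones^Tv^1\end{array}\right]$ is a purely algebraic computation (the lower block closes because $\ones^TF^i(\epsilon)v^1=\beta\epsilon(\tilde{a}_i^T+d_i\ones^T)v^1=\lambda\,\ones^Tv^1$), it applies to every complex eigenpair and every $\alpha,\beta,\epsilon$ with no smallness assumption, and realness then follows from symmetry of $\hat{\L}$ alone; since $v^1\ne\zeros$ forces $v\ne\zeros$, $\lambda$ is genuinely an eigenvalue of $\hat{\L}$. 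In fact, for $\alpha+\beta\ne 0$ your $\hat{\L}$ equals $(\alpha+\beta)\,\L^i(\beta\epsilon/(\alpha+\beta))$ and $F^i(\epsilon)=(\alpha+\beta)P^i(\beta\epsilon/(\alpha+\beta))$, so your construction is exactly Theorem~\ref{theorem:P^R} run in the lifting direction after a rescaling, with the degenerate case $\alpha+\beta=0$ handled uniformly by the same formula. What your route buys is a short, fully rigorous proof of the proposition as stated (arbitrary real $\alpha,\beta,\epsilon$), which is precisely what Lemma~\ref{lemma:distanc_eig} demands, since that lemma quantifies over all real linear combinations; the paper's route only certifies realness perturbatively. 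One cosmetic point: you do not actually need the property $\hat{\L}\ones=\zeros$ anywhere—symmetry of $\hat{\L}$ together with the identities $\ones^T\L^{R_i}=\zeros^T$ and $\ones^T\tilde{a}_i=d_i$ is all the argument uses.
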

	\begin{proof}
		See the Appendix.
	\end{proof}

	To ensure the connectivity of the network graph after a possible failure of  a node, we need to estimate the algebraic connectivity of the reduced graph. Obviously, if the second-smallest eigenvalue of the reduced Laplacian matrix is positive, then the reduced graph is connected. The next lemma introduces an important result from matrix perturbation theory, which enables us to find an upper bound on the distance between the pairs of the eigenvalues of two non-symmetrically perturbed matrices. 
	\begin{lemma}\label{lemma:distanc_eig}\cite{bhatia1987perturbation}
		Let $A$ be an $n\times n$  matrix with eigenvalues $\psi_1\ge\ldots\ge\psi_n$ and $B$ an $n\times n$ matrix with eigenvalues $\xi_1\ge\ldots\ldots\ge\xi_n$. Define the gap between the eigenvalues of these matrices as
		\begin{equation}\label{eq:gap1}
			\gap(A,B)=\max\limits_{j}|\psi_j-\xi_j|.
		\end{equation}
		If  all the real linear combinations of $A$ and $B$ have only real eigenvalues, then
		\begin{equation}\label{eq:gap2}
			\gap(A,B)\le\|A-B\|,
		\end{equation}
		where $\|\cdot\|$ indicates the Euclidean norm.
	\end{lemma}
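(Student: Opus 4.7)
The plan is to attack this through a homotopy: connect $A$ and $B$ by the linear pencil $L(t) = (1-t)A + tB$ for $t \in [0,1]$ and track the eigenvalues along this path. Every such $L(t)$ is a real linear combination of $A$ and $B$, so by hypothesis $L(t)$ has only real eigenvalues for each $t \in [0,1]$. If we can show that the sorted eigenvalue functions $\mu_j(t)$ of $L(t)$ move at rate at most $\|B - A\|$ as $t$ varies, integrating from $0$ to $1$ will give $|\psi_j - \xi_j| = |\mu_j(0) - \mu_j(1)| \le \|B - A\|$ for every $j$, which is exactly \eqref{eq:gap2}.

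First I would establish continuity and piecewise analyticity of the sorted eigenvalues $\mu_1(t) \ge \dots \ge \mu_n(t)$. Since the characteristic polynomial $\det(\mu I - L(t))$ has coefficients that are polynomials in $t$, and since all roots remain real by hypothesis, classical results on continuity of polynomial roots yield continuous curves $\mu_j(t)$. Away from a finite set of crossing points, one can further choose analytic eigenvalue branches via Rellich-type analytic perturbation theory, with well-defined right and left eigenvectors $x(t)$, $y(t)$ normalized so that $y(t)^T x(t) = 1$.

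The heart of the argument is bounding $|\mu_j'(t)|$ by $\|B - A\|$ at each regular point. The standard first-order perturbation formula gives $\mu_j'(t) = y(t)^T(B-A)x(t)$, and the naive Cauchy-Schwarz estimate is $|\mu_j'(t)| \le \|B-A\| \cdot \|x(t)\|\|y(t)\|$. For a general non-normal matrix this can exceed $\|B - A\|$ by the eigenvalue condition number $\|x\|\|y\|/|y^T x|$. The key observation, which is exactly where the full hypothesis on \emph{all} real linear combinations enters, is that if this condition number were strictly greater than one at some $(t_0, \mu_j(t_0))$, then one could perturb within the two-dimensional pencil $\mathrm{span}\{A,B\}$ in a direction forcing the eigenvalue off the real axis, contradicting the assumption that every real combination has real spectrum. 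Formalizing this requires a local Jordan-type analysis at $t_0$ and is the step I expect to be the main obstacle; once it is in hand, one concludes $\|x(t)\|\|y(t)\| = 1$ almost everywhere and hence $|\mu_j'(t)| \le \|B-A\|$.

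The final step is routine: integrating $|\mu_j'(t)|$ across $[0,1]$, handling the finitely many crossing points by continuity, yields $|\mu_j(0) - \mu_j(1)| \le \|B - A\|$ for each $j$, so $\mathrm{gap}(A,B) = \max_j |\psi_j - \xi_j| \le \|B - A\|$ as required. The main difficulty is really concentrated in the derivative bound: everything else (continuity of roots, integration, taking the maximum) is standard, but turning the hypothesis on real spectra of the whole pencil into a pointwise bound on the eigenvalue condition number is the non-trivial ingredient that distinguishes this setting from the easy Hermitian Weyl inequality.
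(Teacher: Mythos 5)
Your homotopy skeleton (track the sorted eigenvalues of $L(t)=(1-t)A+tB$, bound their speed, integrate) is a reasonable frame, but the step you yourself identify as the heart of the argument rests on a false claim, so the proposal has a genuine gap. The hypothesis that every real linear combination of $A$ and $B$ has real spectrum does \emph{not} force the eigenvalue condition numbers along the pencil to equal $1$. Concretely, take $A=\left[\begin{smallmatrix}0&1\\0&1\end{smallmatrix}\right]$ and $B=I_2$: every real combination $\alpha A+\beta B$ is upper triangular, so the hypothesis holds, yet the eigenvalue $0$ of $A$ has right eigenvector $x=(1,0)^T$ and left eigenvector $y=(1,-1)^T$ with $y^Tx=1$ and $\|x\|\,\|y\|=\sqrt{2}>1$; the same holds along the whole pencil $L(t)=\left[\begin{smallmatrix}t&1-t\\0&1\end{smallmatrix}\right]$. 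In particular, a condition number larger than one cannot be contradicted by perturbing inside $\mathrm{span}\{A,B\}$: by hypothesis such perturbations keep the spectrum real (here they keep the matrices triangular), so no eigenvalue can be pushed off the real axis. Hence your claimed intermediate result $\|x(t)\|\,\|y(t)\|=1$ a.e.\ is unprovable, and with it the derivative bound $|\mu_j'(t)|\le\|B-A\|$ loses its justification, since Cauchy--Schwarz alone only yields $|y^T(B-A)x|\le\|B-A\|\,\|x\|\,\|y\|$.

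For context, the paper does not prove this lemma at all: it is imported verbatim from Bhatia's monograph \cite{bhatia1987perturbation}, so your attempt must stand on its own as a proof of a known and genuinely non-trivial theorem. The derivative bound you want is in fact true under the hypothesis, but it is essentially equivalent to the theorem itself; the known proofs do not argue via eigenvector conditioning. Rather, the hypothesis says exactly that $p(s,x,y)=\det(sI-xA-yB)$ is a hyperbolic polynomial in the direction of $s$, and the eigenvalue inequalities (Weyl/Lidskii-type bounds, of which $\mathrm{gap}(A,B)\le\|A-B\|$ is a consequence, the Frobenius norm dominating the spectral norm and spectral radius) are obtained from G{\aa}rding-type results for hyperbolic polynomials. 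Repairing your argument would require replacing the condition-number step with an input of that strength.
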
  

Now we are ready to introduce the main result of this paper. The next theorem provides some sufficient conditions for biconnectivity of a network based on finding a bound on the algebraic connectivity of reduced graphs.
  \begin{theorem}\label{theorem:algebraic_biconn}
		For a given multi-robot system with $n$ robots ($n> 2$), whose interaction network graph is modeled by an undirected connected graph $\G(\V,\E)$, the node $i$ is not an articulation point if, for a small $\epsilon\in \R^+$, we have
		\begin{equation}\label{eq:lambda_2_epsilon_bound}
		\lambda_{3}(\L^i(\epsilon)) >\epsilon \sqrt{n}\  (\sum\limits_{k=1}^{n}a^2_{ik})^{1/2}.
		\end{equation}
	\end{theorem}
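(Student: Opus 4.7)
The plan is to leverage the three preceding results to translate the spectral hypothesis on the perturbed Laplacian into the algebraic-connectivity condition that certifies connectivity of the reduced graph $\G^{R_i}$. Since node $i$ fails to be an articulation point if and only if $\G^{R_i}$ is connected, and since $\L^{R_i}$ is a (possibly reducible) Laplacian, this is in turn equivalent to $\lambda_2(\L^{R_i})>0$. The task therefore reduces to deriving $\lambda_2(\L^{R_i})>0$ from \eqref{eq:lambda_2_epsilon_bound}.

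I would next apply Lemma \ref{lemma:distanc_eig} to the pair $(\L^{R_i},P^i(\epsilon))$ of $(n-1)\times(n-1)$ matrices. The real-spectrum hypothesis required by Lemma \ref{lemma:distanc_eig} is exactly what Proposition \ref{prop:real_eig_F} supplies for every real linear combination $\alpha\L^{R_i}+\beta P^i(\epsilon)$. Taking the second-smallest eigenvalues of the two matrices, the lemma yields
\begin{equation*}
|\lambda_2(\L^{R_i})-\lambda_2(P^i(\epsilon))|\le \gap(\L^{R_i},P^i(\epsilon))\le \|P^i(\epsilon)-\L^{R_i}\|.
\end{equation*}
Corollary \ref{remark:1} then allows me to replace $\lambda_2(P^i(\epsilon))$ by $\lambda_3(\L^i(\epsilon))$, producing the key inequality
\begin{equation*}
\lambda_2(\L^{R_i})\ge \lambda_3(\L^i(\epsilon))-\|P^i(\epsilon)-\L^{R_i}\|.
\end{equation*}

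The remaining step is to estimate the perturbation norm. From the definition \eqref{eq:L^R},
\begin{equation*}
P^i(\epsilon)-\L^{R_i}=\epsilon\bigl(\diag(\tilde{a}_i^T)+\tilde{a}_i\ones^T\bigr),
\end{equation*}
and I would bound $\|\diag(\tilde{a}_i^T)+\tilde{a}_i\ones^T\|\le \sqrt{n}\,(\sum_{k=1}^n a_{ik}^2)^{1/2}$ by combining the operator-norm estimates of the two summands—using $\|\diag(\tilde{a}_i^T)\|=\max_k a_{ik}\le\|\tilde{a}_i\|$ and the rank-one identity $\|\tilde{a}_i\ones^T\|=\sqrt{n-1}\,\|\tilde{a}_i\|$—or by a direct quadratic-form computation $\|Mx\|^2=\sum_j a_{ij}^2(x_j+\ones^T x)^2$ with Cauchy--Schwarz; the structural factorization $\diag(\tilde{a}_i)+\tilde{a}_i\ones^T=\diag(\tilde{a}_i)(I+\ones\ones^T)$ provides a useful viewpoint. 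Inserting this bound and the hypothesis \eqref{eq:lambda_2_epsilon_bound} into the previous inequality forces $\lambda_2(\L^{R_i})>0$, and the theorem follows.

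The main obstacle I expect is the sharpness of the perturbation-norm estimate: the naive triangle-inequality split produces a constant of $(1+\sqrt{n-1})$ rather than the desired $\sqrt{n}$, so some care—either exploiting the rank-one/diagonal interaction or absorbing the slack into a slightly rescaled hypothesis—is needed to land exactly on the stated bound. All other ingredients (the spectral equivalence of Theorem \ref{theorem:P^R}, the real-spectrum property of Proposition \ref{prop:real_eig_F}, the matrix-perturbation gap bound of Lemma \ref{lemma:distanc_eig}, and the Laplacian characterization of connectivity) slot in mechanically.
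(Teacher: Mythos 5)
Your proposal follows the paper's proof step for step: reduce ``not an articulation point'' to $\lambda_2(\L^{R_i})>0$, apply Lemma \ref{lemma:distanc_eig} to the pair $(\L^{R_i},P^i(\epsilon))$ with the real-spectrum hypothesis supplied by Proposition \ref{prop:real_eig_F}, identify $\lambda_2(P^i(\epsilon))=\lambda_3(\L^i(\epsilon))$ via Corollary \ref{remark:1}, and finish by bounding $\|P^i(\epsilon)-\L^{R_i}\|=\epsilon\|\diag(\tilde{a}_i)+\tilde{a}_i\ones^T\|$. The only place you stop short is the norm estimate with the exact constant $\sqrt{n}$, and your unease there is well founded: it is precisely the step the paper gets wrong, not a missing idea on your side.

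The paper asserts that $\diag(\tilde{a}_i)+\tilde{a}_i\ones^T$ equals the matrix whose $k$-th row is constantly the $k$-th entry of $\tilde{a}_i$, and reads off a Frobenius norm of $\sqrt{n}\,(\sum_k a_{ik}^2)^{1/2}$. That identity drops the diagonal: the $(k,k)$ entry of $\diag(\tilde{a}_i)+\tilde{a}_i\ones^T$ is $2a_{ik}$, not $a_{ik}$, so the correct Frobenius norm is $\sqrt{n+2}\,(\sum_k a_{ik}^2)^{1/2}$; your own quadratic-form route, $\|Mx\|^2=\sum_j a_{ij}^2\bigl((e_j+\ones)^Tx\bigr)^2\le (n+2)\,\|\tilde{a}_i\|^2\|x\|^2$, gives the same constant for the spectral norm. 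In fact $\sqrt{n}\,\|\tilde{a}_i\|$ is not a valid upper bound for either norm in general: with equal weights $a_{ik}=c$ the matrix is $c(I+\ones\ones^T)$, whose spectral norm is $cn$ and Frobenius norm $c\sqrt{(n-1)(n+2)}$, both exceeding $c\sqrt{n(n-1)}=\sqrt{n}\,\|\tilde{a}_i\|$. So no refinement of the triangle-inequality split or the rank-one/diagonal interaction will land exactly on $\sqrt{n}$; the defensible statement replaces $\sqrt{n}$ by $\sqrt{n+2}$ (or any larger constant) in \eqref{eq:lambda_2_epsilon_bound}, and with that adjustment your argument closes exactly as the paper's intends, with the condition remaining a sufficient one of the same form.
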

	\begin{proof}
		  Notice that, if node $i$ is not an articulation point, then the reduced graph  $\G^{R_i}$ is a connected graph, and hence $\L^{R_i}$ has a positive second-smallest eigenvalue. Proposition \ref{prop:real_eig_F} shows that any linear combination of $\L^{R_i}$ and $P^i(\epsilon)$ has real eigenvalues. Therefore, due to Lemma \ref{lemma:distanc_eig}, the gap between the eigenvalues of $P^i(\epsilon)$ and $\L^R_i$ is bounded by 
		    \[\gap(P^i(\epsilon),\L{^R_i})\le\|P^i(\epsilon)-\L^{R_i}\|=\epsilon\|\diag(\tilde{a}_i)+\tilde{a}_i\ones^T\|.\]
		 It can be trivially shown that 
		 $$
		 \diag(\tilde{a}_i)+\tilde{a}_i\ones^T=\left[\begin{array}{ccc}
		 a_{11}&\ldots &a_{11}  \\ 
		 \vdots&  &\vdots  \\ 
		 a_{n-1,n-1}&\ldots  & a_{n-1,n-1}\end{array}
		 \right].  \\ 
		$$
		
		The Euclidean norm of a matrix is the square root of the sum of the squares of its elements. Hence $$\|\diag(\tilde{a}_i)+\tilde{a}_i\ones^T\|=\sqrt{n}\ (\sum\limits_{k=1}^{n}a^2_{ik})^{1/2}.$$
		 From \eqref{eq:gap1} and \eqref{eq:gap2}, we have
		 $$\|\lambda_{2}(P^i(\epsilon))-\lambda_2(L^{R_i})\|\le \epsilon\sqrt{n}\ (\sum\limits_{k=1}^{n}a^2_{ik})^{1/2}.$$
		 To prove the connectivity of $\G^{R_i}$ we need to show that $\lambda_2(\L^{R_i})>0$ or 
		 \[\lambda_{2}(P^i(\epsilon))>\epsilon\sqrt{n}(\sum\limits_{k=1}^{n}a^2_{ik})^{1/2}.\]
		 Since the graph is connected, from Remark \ref{remark:1} we get
		  \[\lambda_{2}(P^i(\epsilon))=\lambda_{3}(\L^i(\epsilon))>\epsilon\sqrt{n}(\sum\limits_{k=1}^{n}a^2_{ik})^{1/2}.\]
		 This means that, if we remove node $i$ from the network, it remains connected. In other words, if the condition \eqref{eq:lambda_2_epsilon_bound} is true, then node $i$ is not an articulation point. 
	\end{proof}

	Using Theorem \ref{theorem:algebraic_biconn} for all the nodes, if a decentralized eigenvalue estimation like the approach introduced in \cite{franceschelli2013decentralized} is implemented, then we only need local data to check the biconnectivity. However, in many multi-robot schemes, as formation control and rendezvous problems, the robots have some assigned tasks, and biconnectivity check introduces an extra effort to the robots that can lead to a loss in time and energy. On the other hand, when the biconnectivity check is necessary, an additional amount of energy or time loss is admitted. Therefore, if some of the robots can somehow sense that they are not in the risk of being an articulation point, they can skip the biconnectivity check. The next proposition can help each node to be aware of its own connectivity status to avoid unnecessary checks. 
	
	\begin{prop}\label{prop:locally_biconn}
		In a connected graph $\G(\V,\E)$, node $i$ is not an articulation point if the subgraph created on the set $\{i\}\cup \N_i$ forms a block. 
	\end{prop}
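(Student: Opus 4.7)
The plan is to argue by contradiction: assume that node $i$ is an articulation point of $\G$, and derive a violation of the biconnectivity of the induced subgraph $H$ on $\{i\}\cup\N_i$. Because a block is by definition a connected subgraph with no articulation point, the hypothesis tells me that $H$ itself is biconnected. Then by Lemma~\ref{lemma:bicon_internaldisj} applied to $H$, any two vertices of $H$ are doubly connected, i.e., joined by at least two internally disjoint paths living inside $H$.

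Next I would unpack the articulation-point assumption. If $i$ is an articulation point, then by definition $\G^{R_i}$ is disconnected; let $C_1$ and $C_2$ be two of its connected components. Since $\G$ itself is connected, every vertex of $C_\ell$ is reachable from $i$ in $\G$, and any such path must enter $C_\ell$ through an edge incident to $i$; hence each $C_\ell$ contains at least one neighbour of $i$. I would therefore pick $j\in\N_i\cap C_1$ and $k\in\N_i\cap C_2$, which both lie in $H$ but in different components of $\G^{R_i}$.

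The contradiction then follows in a single step. By the biconnectivity of $H$, there exist two internally disjoint paths between $j$ and $k$ lying inside $H$; the single vertex $i$ can appear as an internal vertex of at most one of them, so at least one of the two paths avoids $i$ entirely. That path lies in $H\setminus\{i\}$, which is a subgraph of $\G^{R_i}$, so it connects $j$ to $k$ within $\G^{R_i}$, contradicting the choice $j\in C_1$, $k\in C_2$. I do not anticipate a serious obstacle; the only mildly delicate point, and the one I would verify most carefully, is the claim that every component of $\G^{R_i}$ must meet $\N_i$, which is precisely where the global connectivity of $\G$ enters the argument.
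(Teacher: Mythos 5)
Your proof is correct, but it takes a genuinely different route from the paper's. The paper argues directly rather than by contradiction: it splits the vertex set into $\V_1=\{i\}\cup\N_i$ and $\V_2=\V\setminus\V_1$, notes that since the subgraph on $\V_1$ is a block, deleting $i$ from it leaves a connected subgraph on $\N_i$, observes that no vertex of $\V_2$ is adjacent to $i$ (else it would belong to $\N_i$), and concludes that after removing $i$ every vertex of $\V_2$ still reaches the connected set $\N_i$, so $\G^{R_i}$ is connected and $i$ is not an articulation point. You instead suppose $i$ is an articulation point, use the global connectivity of $\G$ to show that every component of $\G^{R_i}$ contains a neighbor of $i$, and then apply Lemma~\ref{lemma:bicon_internaldisj} to the local block to produce two internally disjoint paths between neighbors $j$ and $k$ lying in different components, one of which must avoid $i$ --- a contradiction. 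Each approach has its merits: the paper's direct argument exhibits the connectivity of the reduced graph explicitly and uses only the definition of a block, with no appeal to Lemma~\ref{lemma:bicon_internaldisj}; your version localizes all the work to a single pair of neighbors and makes explicit the fact (left rather terse in the paper, where it is phrased as each node of $\V_2$ having a path to the block) that every component of $\G^{R_i}$ meets $\N_i$, which is exactly where global connectivity enters. The only point worth a word in your write-up is the degenerate situation where the local subgraph is a two-vertex block, since the doubly-connected characterization is cleanest for graphs with at least three vertices; this is harmless here because under your contradiction hypothesis $i$ has neighbors in two distinct components, so the local subgraph has at least three vertices.
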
  
	\begin{proof}
	Define $\V_1=\{i\}\cup \N_i$ and $\V_2=\V/\V_1$. Assume that the subgraph based on $\V_1$, namely $\G(\V_1)$ is a block. Due to the connectivity of the graph, there exists at least one path that connects each node in $\V_2$ to the block $\G(\V_1)$. Notice that there is no node in $\V_2$ adjacent to $i$ otherwise it would be in $\N_i$. Since the subgraph $\G(\V_1)$ is a block, we can conclude that the subgraph $\G(\V_1/{i})$ is connected. Consequently, the subgraph $\G((\V_1\cup\V_2=\V)/i)$ is connected. Therefore, from the definition, node $i$ is not an articulation point.      
	\end{proof}
 A node that satisfies Proposition \ref{prop:locally_biconn}, is called locally biconnected.
 \begin{remark}
 	In an undirected communication network graph, to characterize the local subgraph, each node only needs to receive the positions of its neighbors. Then, based on this model, the local  adjacency, degree, and Laplacian matrices can be determined. If the second smallest eigenvalue of the Laplacian matrix is positive then that node is locally biconnected.
 \end{remark}
 
 Now we can summarize our theorems by the following corollary. 
 \begin{coro}\label{coro:biconn}
 	A connected graph $\G(\V, \E)$ is biconnected if every node of $\G$ that is not locally biconnected meets the condition in  \eqref{eq:lambda_2_epsilon_bound}.
 \end{coro}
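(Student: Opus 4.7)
The plan is to invoke the definition of biconnectivity and reduce the corollary to a case analysis over the nodes of $\G$. Recall that $\G$ is biconnected precisely when it has no articulation point. So my goal is to verify, for every $i \in \V$, that $i$ is not an articulation point, under the hypothesis that $\G$ is connected and that every non-locally-biconnected node satisfies \eqref{eq:lambda_2_epsilon_bound}.

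I would partition $\V$ into two classes: the locally biconnected nodes and the remaining ones. For $i$ in the first class, Proposition \ref{prop:locally_biconn} immediately gives that $i$ is not an articulation point, since by assumption the subgraph induced on $\{i\} \cup \N_i$ is a block and connectivity of $\G$ provides the paths joining the rest of $\V$ into that block. For $i$ in the second class, the corollary's hypothesis states that $i$ satisfies \eqref{eq:lambda_2_epsilon_bound}, so Theorem \ref{theorem:algebraic_biconn} applies directly and yields that $i$ is not an articulation point either.

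Combining the two cases, no node of $\V$ is an articulation point, and therefore, by the definition of biconnectivity, $\G$ is biconnected. The only thing worth being careful about is that the two sufficient conditions together cover every vertex; this is immediate from the definition of locally biconnected (a node either has this property or it does not, and by hypothesis the perturbation-based test covers the complement). There is essentially no obstacle here beyond a clean bookkeeping of the two cases, since all of the analytical work has already been done inside Proposition \ref{prop:locally_biconn} and Theorem \ref{theorem:algebraic_biconn}; the corollary is really just the logical conjunction of those two sufficient conditions applied vertexwise.
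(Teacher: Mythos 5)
Your proof is correct and matches the paper's intended argument: the corollary is stated there as a direct summary of Proposition \ref{prop:locally_biconn} and Theorem \ref{theorem:algebraic_biconn}, applied vertexwise exactly as in your two-case split, so no articulation point can exist and $\G$ is biconnected by definition.
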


\section{Simulation results}\label{section:simulations}
In this section we present simulation results to verify the theoretical analysis. 
\begin{example}\label{example:1}
 We suppose that the communications are defined by the $R$-disk model, in which the elements of the adjacency matrix are defined as 
$$a_{ij}=\left\{\begin{array}{lc}
e^{-(\|p_i-p_j\|^2)/(2\sigma)}& \|p_i-p_j\|\le R  \\ 
0&  \|p_i-p_j\|> R,
\end{array} \right.$$ 
where $p_i$ indicated the position of robot $i$. For this simulation, we selected $R=0.5$ and $\sigma=0.125$. Consider the randomly generated network with $n=10$ in Figure \ref{fig:graphwith5robots}. We can see that the only node that is not locally biconnected (see Figure  \ref{fig:graphwith5robots_2}), is the one denoted by $*$. Hence, based on the proposed algorithm, this node starts doing a biconnectivity check. Selecting  $\epsilon=0.05$ gives $\lambda_3(\L^*(\epsilon))=0.034$, and $(\sum_{k=1}^{n}a^2_{*k})^{1/2}= 0.062$.  We can verify that the conditions in \eqref{eq:lambda_2_epsilon_bound} holds
$$\lambda_3(\L^*(\epsilon))=0.034>0.05\times\sqrt{10}\times 0.062=0.0098.$$
As we expected, the node $*$ meets the sufficient conditions to for not being an articulation point.
\end{example} 
Notice that the condition in \eqref{eq:lambda_2_epsilon_bound} is not necessary but sufficient. This means that, if we keep the same graph and we change the weights, \eqref{eq:lambda_2_epsilon_bound} might not hold anymore.
\begin{figure}
	\centering
	\includegraphics[scale=0.4]{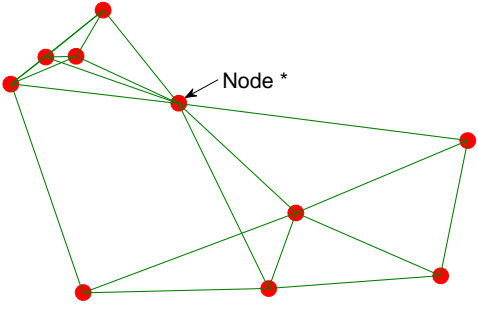}
	\caption{Communication graph in Example \ref{example:1}.}
	\label{fig:graphwith5robots}
\end{figure}

\begin{figure}
	\centering
	\includegraphics[scale=0.4]{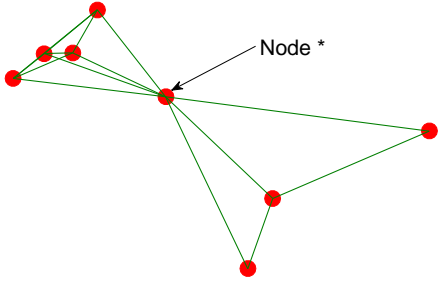}
	\caption{Local graph associated to node $*$ in Example \ref{example:1}}
	\label{fig:graphwith5robots_2}
\end{figure}

\section{Conclusions and Future works}\label{section:conclusions}
In this manuscript, a decentralized algorithm to determine the sufficient conditions for analyzing biconnectivity  was introduced. The definition of locally biconnected node was presented. We proved that, in order to have a  biconnected network, the nodes that are not locally biconnected must meet a special condition, one the third-smallest eigenvalue of the Laplacian matrix. This condition was obtained by making the nodes close to the disconnection. We also presented some theorems on the eigenvalues of non-symmetrically perturbed Laplacian matrix, and we used them to achieve the biconnectivity condition for the algorithm. In our future work, we are going to develop a decentralized protocol to obtain a biconnected network graph.
\appendices
\section*{Appendix}
\label{appendix:proof_real_eig_F}
In this section we provide the proof of Proposition \ref{prop:real_eig_F}. For this purpose, we need some preliminary manipulations and lemmas. Let  $\gamma=\alpha+\beta$ and $\eta=\beta\epsilon$. From \eqref{eq:L^R} we get
$$\begin{array}{rl}
F^i(\epsilon)=&(\alpha+\beta)\L^{R_i}+\beta\epsilon(\diag(\tilde{a}_i^T)+\tilde{a}_i\ones^T_{n-1})  \\ 
=& \gamma\L^{R_i}+\eta(\diag(\tilde{a}_i^T)+\tilde{a}\ones ^T).
\end{array} $$
For $\beta=0$, $F^i(\epsilon)$ becomes a symmetric matrix, and one can trivially show that it has real eigenvalues. So we need to prove the proposition for $\beta\ne 0$. Let $Q^i(\eta)=\gamma\L^{R_i}+\eta \tilde{a}_i\ones $. This gives
\begin{equation}\label{eq:F=Q}
	F^i(\epsilon= \eta/\beta)=Q^i(\eta)+\eta\diag(\tilde{a}_i^T).
\end{equation}
For convenience, hereafter we denote $F^i(\eta/\beta)$ by $F^i(\eta)$.

We recall the following lemma from the perturbation theory. 

\begin{lemma}\cite{seyranian2003multiparameter}\label{lemma:cai_eigenvalue_perturb}
	For a non-negative real number $\eta$, consider a matrix $M(\eta)$ and let $\lambda_1(M)=\ldots=\lambda_k(M), k\in \left[1, n\right]$ be a semi-simple eigenvalue\footnote{ An eigenvalue of a matrix
		is called semi-simple if its algebraic multiplicity is equal to its geometric multiplicity.} of $M(0)$. Denote by $v_1(M),\ldots, v_l(M)$ and $w_1(M),\ldots,w_l(M)$ associated right and left eigenvectors such that
	\[\left[\begin{array}{c}
	w_1^T(M)\\\ldots 
	\\ w_l^T(M)
	
	\end{array} \right]\left[\begin{array}{ccc}
	v_1(M)&\ldots  &v_l(M) 
	\end{array} \right]=I.\]
	Let $M^\prime =\frac{dM(\eta)}{d\eta}|_{\eta=0}$. Then the derivatives of the eigenvalues of $M$ with respect to $\eta$, $\displaystyle\frac{d\lambda(M)}{d\eta}|_{\eta=0}$, exist, and they are the eigenvalues of the following matrix
	\begin{equation}\label{eq:wTQv}
	\Delta=\left[\begin{array}{ccc}
	w_1(M)^TM^\prime v_1(M)&\ldots  &w_1(M)^TM^\prime v_l(M)  \\ 
	\vdots& \ddots & \vdots \\ 
	w_l(M)^TM^\prime v_1(M)& \ldots & w_l(M)^TM^\prime v_l(M)
	\end{array} \right]
	\end{equation}
\end{lemma}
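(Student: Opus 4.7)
The plan is to analyze the $l$ eigenvalues of $M(\eta)$ that branch from $\lambda := \lambda_1(M)$ as $\eta$ moves away from $0$ by reducing the local spectral problem to the $l$-dimensional invariant subspace on which $M(0)$ acts as $\lambda I$. Let $V = [v_1(M), \ldots, v_l(M)]$ and let $W$ be the matrix whose columns are $w_1(M), \ldots, w_l(M)$, so by hypothesis $W^T V = I$. Because $\lambda$ is semi-simple, the spectral projector $P_0 = V W^T$ is the full Riesz projector onto $\ker(M(0) - \lambda I)$, and $M(0) P_0 = P_0 M(0) = \lambda P_0$. Perturbation theory for a semi-simple eigenvalue then collapses the local eigenvalue problem to diagonalizing $\Delta = W^T M' V$.

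The first step is to invoke the analyticity (or, at minimum, continuity) of the Riesz projector $P(\eta) = \frac{1}{2\pi i}\oint_\Gamma (zI - M(\eta))^{-1}\, dz$ for a small contour $\Gamma$ enclosing $\lambda$ and no other eigenvalue of $M(0)$. For $\eta$ near $0$, $P(\eta)$ has constant rank $l$ and encloses exactly $l$ eigenvalues of $M(\eta)$ counted with multiplicity. Because $\lambda$ is semi-simple (as opposed to merely of algebraic multiplicity $l$), one can select eigenpair branches $(\lambda_j(\eta), v_j(\eta))$, $j = 1, \ldots, l$, which are differentiable at $\eta = 0$: the Puiseux expansions with fractional exponents that typically plague defective eigenvalues collapse to ordinary power series. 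At $\eta = 0$ each limiting eigenvector $v_j(0)$ lies in $\ker(M(0) - \lambda I)$, so $v_j(0) = V c_j$ for some coefficient vector $c_j \in \mathbb{R}^l$.

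The main computation is then a single application of the product rule. Differentiating $M(\eta) v_j(\eta) = \lambda_j(\eta) v_j(\eta)$ at $\eta = 0$ gives
\begin{equation*}
M' V c_j + (M(0) - \lambda I)\, v_j'(0) = \lambda_j'(0)\, V c_j.
\end{equation*}
Left-multiplying by $W^T$ and using the left-eigenvector identity $W^T M(0) = \lambda W^T$ together with $W^T V = I$ annihilates the $v_j'(0)$ term and yields $\Delta c_j = \lambda_j'(0)\, c_j$. So each derivative $\lambda_j'(0)$ is an eigenvalue of $\Delta$ with associated eigenvector $c_j$; since the $l$ branches account for the full spectrum of the $l \times l$ matrix $\Delta$, every eigenvalue of $\Delta$ is obtained this way.

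The step I expect to be most delicate is the regularity claim, i.e.\ the differentiability of the branches $\lambda_j(\eta)$ and the existence of a smooth selection of eigenvectors $v_j(\eta)$; this is exactly where the semi-simplicity of $\lambda$ (not just its algebraic multiplicity) is essential. I would close this gap by restricting $M(\eta)$ to $\mathrm{range}\, P(\eta)$: in the smoothly varying basis $P(\eta) V$, which has rank $l$ for sufficiently small $\eta$, the restriction is represented by the $l \times l$ matrix $R(\eta) = W^T M(\eta) P(\eta) V$, which depends smoothly on $\eta$ and satisfies $R(0) = \lambda I$ and $R'(0) = \Delta$ (the latter after absorbing a first-order correction from $P'(0)$, which vanishes against $W^T$ by standard Kato-projector identities). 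The eigenvalues of $R(\eta)$ coincide with the branches, and since $R$ is smooth and $R(0)$ is already diagonal, their derivatives at $0$ are the eigenvalues of $R'(0) = \Delta$, completing the proof.
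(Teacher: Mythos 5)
The paper does not prove this lemma at all: it is imported verbatim from \cite{seyranian2003multiparameter} as a known result from multiparameter perturbation theory, so there is no in-paper argument to compare against. Judged on its own, your proof is correct and follows the standard Riesz-projector route (Kato-style): reduce the local spectral problem to the $l$-dimensional invariant subspace, differentiate, and kill the $v_j'(0)$ term with the left eigenvectors. Two remarks on precision. First, your ``main computation'' as initially presented presupposes differentiable eigenvector branches $v_j(\eta)$, which semi-simplicity of $\lambda$ alone does not guarantee (e.g.\ when $\Delta$ is itself defective, the eigenvectors need not admit a smooth selection even though the first-order eigenvalue derivatives exist); you correctly flag this as the delicate step, and your fallback via the reduced family $R(\eta)=W^TM(\eta)P(\eta)V$ is the right fix --- indeed $P_0P'(0)P_0=0$ forces $W^TP'(0)V=0$, so $R(0)=\lambda I$ and $R'(0)=\Delta$ as you claim. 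Second, the eigenvalues of $R(\eta)$ are not literally the eigenvalue branches of $M(\eta)$; the faithful matrix of the restriction is $\left(W^TP(\eta)V\right)^{-1}W^TM(\eta)P(\eta)V$, but since $W^TP(\eta)V=I+O(\eta^2)$ the discrepancy is second order and does not affect the derivatives at $\eta=0$. With those two points tightened, your argument is a complete, self-contained proof of the cited lemma (and is arguably a useful addition, since the paper leaves the reader to chase the reference). You might also note in passing that the lemma's statement confuses $k$ and $l$ for the multiplicity; your proof consistently uses $l$, which is the intended reading.
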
 
In order to prove Proposition \ref{prop:real_eig_F}, we introduce the following steps:

\begin{enumerate}[a)]
	\item In the first step, we characterize the eigenvalues of $Q^i(\eta)$, and we show they are all real.
	\item The second step is to demonstrate that the eigenvalues of $F^i(\eta)$ are all real. 
\end{enumerate}

\subsection*{Eigenvalues of $Q^i(\eta)$}
Note that $Q^i(\eta)$ is obtained by perturbing matrix $\gamma\L^{R_i}$ by $\eta\tilde{a}_i\ones ^T$. 

\begin{lemma}\label{theorem:Q}
	Let $\G$ be a connected graph and $\L^{R_i}$ be the Laplacian matrix of $\G^{R_i}$, with $l$ null eigenvalues $\lambda_1(\L^{R_i})=\lambda_2(\L^{R_i})=\ldots=\lambda_l(\L^{R_i})=0$. Then for the $k$-th eigenvalue of $Q^i$ we get
	
	\begin{equation}\label{eq:eig_Q_not0}
		\lambda_k(Q^i)=\gamma\lambda_k(\L^{R_i}), \ \ k=l+1, \ldots,n-1,
	\end{equation} 
	while for a small $\eta\in \R$ 
	
	\begin{equation}\label{eq:eig_Q_0}
		\lambda_k(Q^i)=0, \ \ k=2,\ldots,l,
	\end{equation}
	and $\lambda_1(Q^i)(\eta)$ gets a positive value. 
	\end{lemma}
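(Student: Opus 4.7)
My plan is to exploit the rank-one structure of the perturbation $\eta \tilde{a}_i \ones^T$ together with the fact that $\ker(\L^{R_i})$ is spanned by the indicator vectors $\mathbf{1}_{C_1}, \ldots, \mathbf{1}_{C_l}$ of the $l$ connected components $C_1,\ldots,C_l$ of $\G^{R_i}$. Because $\ones = \sum_{j=1}^{l} \mathbf{1}_{C_j}$ lies in $\ker(\L^{R_i})$ and $\L^{R_i}$ is symmetric, every eigenvector of $\L^{R_i}$ with a nonzero eigenvalue automatically satisfies $\ones^T v = 0$, which makes the perturbation invisible to it. I will exhibit $n-2$ eigenpairs of $Q^i$ explicitly and then pin down the last eigenvalue by a trace argument.

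For \eqref{eq:eig_Q_not0}: if $\L^{R_i} v_k = \lambda_k(\L^{R_i}) v_k$ with $\lambda_k(\L^{R_i}) > 0$, then $v_k \in \ker(\L^{R_i})^\perp \subseteq \ones^\perp$, so $(\tilde{a}_i \ones^T) v_k = \zeros$ and $Q^i v_k = \gamma \lambda_k(\L^{R_i}) v_k$. This furnishes the $n-1-l$ eigenvalues claimed in \eqref{eq:eig_Q_not0}.

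For \eqref{eq:eig_Q_0}: the subspace $W = \ker(\L^{R_i}) \cap \ones^\perp = \{\sum_{j=1}^{l} \alpha_j \mathbf{1}_{C_j} : \sum_j \alpha_j |C_j| = 0\}$ has dimension $l-1$, and every $v \in W$ is annihilated by both $\L^{R_i}$ and $\tilde{a}_i \ones^T$, hence by $Q^i$ for every $\eta \in \R$. Together with the previous step this accounts for $n-2$ linearly independent eigenvectors living in the complementary subspaces $\ker(\L^{R_i})^\perp$ and $\ker(\L^{R_i}) \cap \ones^\perp$; only one eigenvalue of $Q^i$ remains. Using linearity and the cyclic property of the trace,
\begin{equation*}
\mathrm{tr}(Q^i) \;=\; \gamma\,\mathrm{tr}(\L^{R_i}) + \eta\,\ones^T \tilde{a}_i \;=\; \gamma \sum_{k=l+1}^{n-1} \lambda_k(\L^{R_i}) \,+\, \eta d_i,
\end{equation*}
so subtracting the sum of the eigenvalues already identified leaves exactly $\eta d_i > 0$ (since $\G$ is connected, $d_i > 0$). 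For $\eta$ sufficiently small, $\eta d_i < \gamma \lambda_{l+1}(\L^{R_i})$, so under the paper's ordering this is $\lambda_1(Q^i)$.

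The one subtlety I anticipate is justifying the trace step: $Q^i$ is not symmetric, so a priori its spectrum could contain non-real eigenvalues. However, since $Q^i$ is a real matrix and the $n-2$ eigenvalues I have explicitly exhibited are real, the remaining eigenvalue must also be real (any non-real eigenvalue would force a conjugate pair, contradicting the count), making the sum-of-eigenvalues identity applicable. As an independent cross-check aligned with Lemma \ref{lemma:cai_eigenvalue_perturb}, one can apply that lemma at $\eta = 0$ using the bi-orthonormal basis $v_j = \mathbf{1}_{C_j}$, $w_j = \mathbf{1}_{C_j}/|C_j|$ for the $l$-fold null eigenvalue of $\gamma \L^{R_i}$; with $M'=\tilde{a}_i \ones^T$ this yields a rank-one matrix $\Delta$ whose entries are $(\Delta)_{jk} = (s_j/|C_j|)\,|C_k|$, where $s_j = \sum_{m \in C_j} a_{im}$, and whose unique nonzero eigenvalue is $\sum_j s_j = d_i$. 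This independently confirms that exactly one of the $l$ initially null eigenvalues lifts off the origin at rate $d_i$ while the other $l-1$ remain at zero.
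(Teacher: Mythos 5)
Your proof is correct, and the second half takes a genuinely different route from the paper. The treatment of the non-null eigenvalues (eigenvectors orthogonal to $\ker(\L^{R_i})\ni\ones$ are invisible to the rank-one term $\eta\tilde{a}_i\ones^T$) coincides with the paper's. For the null eigenvalues, however, the paper invokes the first-order perturbation result (Lemma \ref{lemma:cai_eigenvalue_perturb}) for the semi-simple zero eigenvalue of $\gamma\L^{R_i}$ and computes the derivative matrix $\Delta$ at $\eta=0$, concluding only that one branch has positive derivative while the others have zero derivative; you instead exhibit the $(l-1)$-dimensional subspace $\ker(\L^{R_i})\cap\ones^{\perp}$ that is annihilated by $Q^i(\eta)$ for \emph{every} $\eta$, and then pin down the single remaining eigenvalue exactly as $\eta d_i$ via the trace (justified by the invariant-subspace/conjugate-pair count, which also gives realness directly). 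Your argument is more elementary (no perturbation lemma needed), exact rather than first-order (it shows the $l-1$ eigenvalues are exactly zero for all $\eta$, which the paper's derivative argument alone does not literally establish, and gives the lifted eigenvalue as $\eta d_i$ rather than an asymptotic rate), and your cross-check quietly corrects a normalization slip in the paper: with the biorthonormal choice $w_1=\ones/(n-1)$ (or $w_j=\mathbf{1}_{C_j}/|C_j|$), the nonzero entry of $\Delta$ yields rate $d_i=\sum_k a_{ik}$, not $(n-1)\sum_k a_{ik}$ as written in \eqref{eq:wTQv2}; this does not affect the sign, which is all the lemma needs. Two cosmetic caveats, shared with the paper's own statement rather than introduced by you: the positive eigenvalue $\eta d_i$ is labelled $\lambda_1(Q^i)$ only by continuity of the branch through $v_1=\ones$, not by the sorted-eigenvalue convention, and its positivity (and your comparison with $\gamma\lambda_{l+1}(\L^{R_i})$) tacitly assumes $\eta>0$ (and $\gamma>0$), exactly as the paper does.
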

	\begin{proof}
	For non-null eigenvalues of the reduced Laplacian matrix we have
		$$v_k^T(\L^{R_i})\ones=\ones^T v_k(\L^{R_i})=0,\ \ k=l+1,\ldots,n-1.$$
        Multiply $Q^i$ and $v_k(\L^{R_i})$		
		$$\begin{array}{c}
		Q^i(\eta)v_k(\L^{R_i})=\gamma\L_i^Rv_k(\L^{R_i})+\eta \tilde{a}_i\ones^T v_k(\L^{R_i}) \\=\gamma\lambda_k(\L^{R_i})v_k(\L^{R_i}), \ k=l+1,\ldots,n-1.
		
		\end{array} $$
		which shows that all the non-null eigenvalues of $Q^i(\eta)$ are equal to those of $\gamma\L^{R_i}$.
		
		The null eigenvalue of a Laplacian matrix is semi-simple \cite{bapat1998algebraic}. Let $\{v_1(\L^{R_i}),\ldots,v_l(\L^{R_i})\}$ be a set of orthogonal eigenvectors associated with the null eigenvalue of $\L^{R_i}$. Without loosing generality, let $v_1(\L^{R_i})=\ones$. Since $Q^i(0)=\gamma\L^{R_i}$ is symmetric, the left eigenvectors are equal to the right ones. Replacing $M^\prime=dQ_i(\eta)/d\eta=\tilde{a}_i\ones$, $w_1(M)=v_1(M)=\ones$, $w_k(M)=v_k(M)=v_k(\L^{R_i}), k=2,\ldots, l$ in \eqref{eq:wTQv}, knowing that $\ones^T v_j(\L^{R_i})=0, \ j=2,\ldots, l$, we get

		 \begin{equation}\label{eq:wTQv2}
		 \displaystyle
		  \begin{array}{l}
		  		  \displaystyle \Delta_{1,1}=w_1^TM^\prime v_1=\ones^T(\diag(\tilde{a}_i^T)\ones^T)\ones=(n-1)\sum_{k=1}^{n}a_{ik} \\ 
		 \displaystyle
		  \Delta_{m,j}=w_m^TM^\prime v_j=w_m^T(\diag(\tilde{a}_i^T)\ones^T)v_j=0\vspace{0.2cm} \\
		  \hspace{3cm}m,j=1,\ldots,l,\ \ (m,j)\ne(1,1)  \\ 		 
		  \end{array}.
		  \end{equation}
		Since the graph is supposed to be connected, then $\tilde{a}_{i}\ne \zeros\ \forall i$. Hence the matrix in \eqref{eq:wTQv} has only one non-null eigenvalue equal to $(n-1)\sum_{k=1}^{n}a_{ik}$. Consequently 
		\begin{equation}\label{eq:dlambda_deta}
			\frac{d\lambda_k(Q^i)}{d\eta}|_{\eta=0}=\left\{\begin{array}{cc}
			\displaystyle
			(n-1)\sum_{k=1}^{n}a_{ik}>0& k=1 \\ 
			\displaystyle
			0 & k=2, \ldots, l
			\end{array} \right.		
		\end{equation}
		
		This means that, by changing $\eta$ from zero, all the null eigenvalues of $Q^i(\eta)$ remain on the origin apart from one that moves to the right along the real axis. 
	\end{proof} 

So we proved that all the eigenvalues of $Q^i(\eta)$, and consequently the associated eigenvectors, get only real values.
\subsection*{Eigenvalues of $F^i(\eta)$}
 Now we want to show that the eigenvalues of $F^i(\eta)$ are all real and cannot get complex values.   
 \begin{proof}

Let $\lambda({F^i})$ be an eigenvalue of $F^i(\eta)$ and $v(F^i)$ be an associated eigenvector, that both can possibly get complex values. 
\begin{equation}\label{eq:F_eig}
	F^i(\eta)v({F^i})=\lambda(F^i)v({F^i}).
\end{equation}
For a small $\eta$ we can write 
\begin{equation}\label{eq:F_eig_sum}
\lambda({F^i})=\sum\limits_{k=0}^{\infty}\lambda_k({F^i})\eta^k,
\end{equation}
and
\begin{equation}\label{eq:F_eigvec_sum}
	v({F^i})=\sum\limits_{k=0}^{\infty}v_k({F^i})\eta^k.
\end{equation}
where $\lambda_k({F^i})$ and $v_k({F^i})$ may also take complex values.
We use induction to prove that all the eigenvalues of $F^i(\eta)$ are real. In this way, we first show that the statement is true for the first element ($\lambda_0(F^i)$ is real). Afterwards, we show that all the first $k-1$ elements are real, then the $k$-th eigenvalue is also real. 

From \eqref{eq:F=Q},  \eqref{eq:F_eig}, \eqref{eq:F_eig_sum}, and \eqref{eq:F_eigvec_sum}, we get
\begin{equation}
\begin{array}{l}
	(Q^i(\eta)+\eta \diag(\tilde{a}_i^T))\sum\limits_{k=0}^{\infty}v_k({F^i})\eta^k=  \\ 
\hspace{2cm}\sum\limits_{k=0}^{\infty}v_k({F^i})\eta^k\sum\limits_{k=0}^{\infty}\lambda_k({F^i})\eta^k.
\end{array} 
\end{equation}
To verify the equality for a non-zero $\eta$, the coefficients of all the exponents of $\eta$ must be equal in both the left and right side. For $k=0$ we get
$$Q^i(\eta)v_0(F^i)=\lambda_0({F^i})v_0(F^i),$$
This means that $\lambda_0({F^i})$ is an eigenvalue of $Q^i(\eta)$ with the associated left and right eigenvectors $v_0(F^i), w_0(F^i)$, and hence they are real.
For $k>0$, the equality of the two sides gives
 \begin{equation*}
 	Q^i(\eta)v_k({F^i})+\diag(\tilde{a}_i^T)v_{k-1}(F^i)=\sum\limits_{l=0}^{k}\lambda_l(F^i)v_{k-l}(F^i).
 \end{equation*}
By extracting the terms $0,1,$ and $k$ from the sum and doing some manipulations, we reach to 
\begin{equation}\label{eq:F_induction}
\begin{array}{l}
(Q^i(\eta)-\lambda_0(F^i)I)v_k({F^i})\\\hspace{1.5cm}+(\diag(\tilde{a}_i^T)-\lambda_1(F^i)I)v_{k-1}(F^i)  \\ 
\hspace{1.5cm}-\sum\limits_{l=2}^{k-1}\lambda_l(F^i)v_{k-l}(F^i)=\lambda_k({F^i})v_0(F^i). 
\end{array} 
\end{equation}
Now let $w_0(F^i)$ be a left eigenvector of $F^i(\eta)$ for $\lambda_0(F^i)$ so that $w_0^T(F^i)v_0(F^i)=1$. Then we have
$$w_0(F^i)^T(Q^i(\eta)-\lambda_0(F^i)I)=0.$$
By multiplying both sides of \eqref{eq:F_induction} by $w_0^T(F^i)$, the first term in the left side becomes zero, and we get
\begin{equation}\label{eq:F_induction2}
\begin{array}{l}
\displaystyle w_0^T(F^i)(\diag(\tilde{a}_i^T)-\lambda_1(F^i)I)v_{k-1}(F^i)
 \\\hspace{2cm}
-w_0(F^i)^T\sum\limits_{l=2}^{k-1}\lambda_l(F^i)v_{k-l}(F^i)\\\hspace{2cm}=\displaystyle\lambda_k({F^i})w_0^T(F^i)v_0(F^i)=\lambda_k({F^i}).
\end{array} 
\end{equation}
Notice that, if $\lambda_l(F^i)$ are real for $l=0,\ldots, k-1$ , then  $v_l(F^i)$ become all real valued. This implies that the left hand side of \eqref{eq:F_induction2} is a real number and consequently $\lambda_k({F^i})$ must get a real value. Therefore, we prove the proposition by induction.
 \end{proof}

\bibliographystyle{IEEEtran}
\bibliography {biblio_connectivity,biblio,biblio_applications}

\end{document}